\documentclass[10pt, conference]{ieeeconf}      

\overrideIEEEmargins



\usepackage{enumerate}
\usepackage{amsmath}
\usepackage{amsfonts}
\usepackage{amssymb}
\usepackage{amsbsy}
\usepackage{dsfont}
\usepackage{theorem}
\usepackage{algorithm}
\usepackage{algorithmic}
\usepackage{mathrsfs}
\usepackage{paralist}
\usepackage{epsfig}
\usepackage{subfig}
\usepackage{makeidx} 
\usepackage{color}
\usepackage{pstricks}
\usepackage{pst-node}
\usepackage{multido} 
\usepackage{varioref}
\usepackage{wrapfig} 


\theoremstyle{plain} \newtheorem{corollary}{Corollary}

\newcommand \E {\mathop{\mbox{\ensuremath{\mathbb{E}}}}\nolimits}

\renewcommand \Pr {\mathop{\mbox{\ensuremath{\mathbb{P}}}}\nolimits}

\newcommand{\cset}[2]{\left\{\, #1 ~\middle|~ #2 \,\right\} }

\newcommand\Reals {{\mathds{R}}}

\newcommand \CA {{\mathcal{A}}}

\newcommand \CH {{\mathcal{H}}}

\newcommand \CM {{\mathcal{M}}}

\newcommand \CS {{\mathcal{S}}}

\newcommand \defn {\mathrel{\triangleq}}


\newcommand \argmax{\mathop{\rm arg\,max}}

\newcommand \norm[1]{\left\|#1\right\|}

\DeclareMathAlphabet{\mathpzc}{OT1}{pzc}{m}{it}

\newcommand \bel {\xi}

\newcommand \pol {\pi}
\newcommand \Pols {\Pi}
\newcommand \mdp {\mu}
\newcommand \MDPs {\CM}

\newcommand \Emp {\E_\mdp^\pol}
\newcommand \Pmp {\Pr_\mdp^\pol}
\newcommand \Ebp {\E_\bel^\pol}

\newcommand\dd{\,\mathrm{d}}

\newcommand \seq[2]{#1^{#2}}
\newcommand \pseq[3]{#1_{#2}^{#3}}

\newcommand \param {{\theta}}
\newcommand \Params {\Theta}

\newcommand \egreedy {\varepsilon}
\newcommand \nsam {\ensuremath{N}}

\newcommand \rew {\rho}
\newcommand \disc {\gamma}

\newcommand \step {\eta}

\newcommand \vv {v}
\newcommand \vq {q}

\newcommand \hist {h}

\newcommand \Urange {U_{\max}}

\def\clap#1{\hbox to 0pt{\hss#1\hss}}

\def\mathrlap{\mathpalette\mathrlapinternal}

\def\mathrlapinternal#1#2{%
           \rlap{$\mathsurround=0pt#1{#2}$}}

\newcommand \citep {\cite}
\newcommand \citet {\cite}

\usepackage{pgfplots}
\usepackage{tikz}
\pgfplotsset{compat=newest} 
\pgfplotsset{plot coordinates/math parser=false}

\title{\LARGE \bf Monte-Carlo utility estimates for Bayesian reinforcement learning}
\author{Christos Dimitrakakis}

\begin{document} 

\maketitle
\thispagestyle{empty}
\pagestyle{empty}

\begin{abstract} 
  This paper introduces a set of algorithms for Monte-Carlo Bayesian reinforcement learning. Firstly, Monte-Carlo estimation of upper bounds on the Bayes-optimal value function is employed to construct an optimistic policy. Secondly, gradient-based algorithms for approximate upper and lower bounds are introduced. Finally, we introduce a new class of gradient algorithms for Bayesian Bellman error minimisation. We theoretically show that the gradient methods are sound. Experimentally, we demonstrate the superiority of the upper bound method in terms of reward obtained. However, we also show that the Bayesian Bellman error method is a close second, despite its significant computational simplicity.
\end{abstract}

\section{Introduction}
\label{sec:introduction}
Bayesian reinforcement learning~\cite{strens2000bayesian,vlassis2012bayesian} is the decision-theoretic approach~\cite{Degroot:OptimalStatisticalDecisions}
to solving the reinforcement learning problem. Unfonrtunately, calculating posterior distributions can be computationally expensive. Morever, the Bayes-optimal
decision can be  intractable~\cite{NIPS2007:ross:bapomdp,dimitrakakis:icaart2010,strens2000bayesian}, and even calculating an optimal solution in a restricted class can be difficult~\cite{dimitrakakis:mmbi:ewrl:2011}. This paper proposes a set of algorithms that take actions by estimating bounds on the Bayes-optimal utility through sampling. They include a direct Monte-Carlo approach, as well as gradient-based approaches. We demonstrate the effectiveness of the proposed algorithms experimentally.

\subsection{Setting}
\label{sec:setting}
In the \emph{reinforcement learning problem}, an agent is acting in some unknown Markovian environment $\mdp \in \MDPs$, according to some policy $\pol \in \Pols$. The agent's policy is a procedure for selecting actions, with the action at time $t$ being $a_t \in \CA$. The environment reacts to this action with a sequence of states $s_t \in \CS$ and rewards $r_t \in \Reals$. Since the agent may be learning from experience, this interaction may depend on the complete history,  $h_t \in \CH$, where $\CH \defn (\CS \times \CA \times \Reals)^*$ is the set of all state action reward sequences.

The complete \emph {Markov decision process} (MDP) is specified as follows.
The agent's action at time $t$ depends on the history observed so far:
\begin{equation}
  \label{eq:policy}
  a_t \mid h_t = (\seq{s}{t}, \seq{r}{t}, \seq{a}{t-1}) \sim \Pr^\pol(a_t \mid \seq{s}{t}, \seq{r}{t}, \seq{a}{t-1}),
\end{equation}
where $\seq{s}{t}$ is a shorthand for the sequence $(s_i)_{i=1}^t$;
similarly, we use $\pseq{s}{k}{t}$ for $(s_i)_{i=k}^t$. We denote the environment's response at time $t+1$ given the history at time $t$ by:
\begin{equation}
  \label{eq:mdp}
  s_{t+1}, r_{t+1}  \mid  h_t = (\seq{s}{t}, \seq{r}{t}, \seq{a}{t}) \sim \Pr_\mdp(s_{t+1}, r_{t+1} \mid s_t, a_t)
\end{equation}
Finally, the agent's goal is determined through its utility:
\begin{equation}
  \label{eq:utility}
  U = \sum_{t=1}^\infty \disc^{t-1} r_t,
\end{equation}
which is a discounted sum of the total instantaneous rewards obtained, with $\disc \in [0,1]$. Without loss of generality, we assume that $U \in [0, \Urange]$.  The optimal agent policy  maximises $U$ in expectation, i.e.
\begin{equation}
  \label{eq:expected-utility}
  \max_{\pol \in \Pols} \Emp U,
\end{equation}
where  $\Pmp, \Emp$ denote probabilities and expectations under the process jointly specified by $\mdp, \pol$.
However, as in the reinforcement learning problem the environment $\mdp$ is unknown, the above maximisation is ill-posed. Intuitively, the agent can increase its expected utility by either: 
\begin{inparaenum}[(i)]
\item Trying to better estimate $\mdp$ in order to perform the maximisation later (exploration), or
\item Use a best-guess estimate of $\mdp$ to obtain high rewards (exploitation).
\end{inparaenum}

In order to solve this trade-off, we can adopt a Bayesian viewpoint~\cite{Degroot:OptimalStatisticalDecisions,savage1972fs}, where we consider a (potentially infinite) set of environment models $\MDPs$. In particular, we select a \emph{prior probability} measure $\bel$ on $\MDPs$. For an appropriate subset $B \subset \MDPs$, the quantity $\bel(B)$ describes our initial belief that the correct model lies in $B$. We can now formulate the alternative goal of maximising the expected utility with respect to $\bel$:
\begin{equation}
  \label{eq:bayes-utility}
  \E^*_\bel U \defn \max_\pol \Ebp U = \max_\pol \int_\MDPs (\Emp U) \dd{\bel(\mdp)}.
\end{equation}
This makes the problem formally sound. A policy $\pol_\bel^* \in \argmax_\pol \Ebp U$ is called \emph{Bayes-optimal} as it solves the exploration-exploitation problem with respect to out prior belief $\bel$. However, its computation is generally hard~\cite{duff2002olc} even in restricted classes of policies~\cite{dimitrakakis:mmbi:ewrl:2011}. On the other hand, simple heuristics such as Thompson sampling~\cite{thompson1933lou,strens2000bayesian} provide an efficient trade-off~\cite{agrawal:thompson,Kaufmann:Thompson} between exploration and exploitation.

\subsection{Related work and our contribution}
\label{sec:related-work}
One difficulty that arises when adopting a Bayesian approach to sequential decision making is that in many interesting problems, the posterior calculation itself requires approximations, mainly due to partial observability~\citep{Poupart:ModelBayesBAPOMDP,NIPS2007:ross:bapomdp}.
The second and more universal problem, which we consider in this paper, is that finding the Bayes-optimal policy is hard, as the set of policies we must consider grows exponentially with the horizon $T$. However, heuristics exist which, given the current posterior, can obtain a near-optimal policy~\citep{Kolter-Ng:NearBayesianExploration,DBLP:conf/ijcai/CastroP07,strens2000bayesian,araya2012near,dimitrakakis:mmbi:ewrl:2011,poupart2006asd}. In this paper we shall focus on model-based algorithms that use approximate lower and upper bounds on the Bayes-optimal utility to select actions.

The general idea of computing lower and upper bounds via Monte-Carlo sampling in model-based Bayesian reinforcement learning was introduced in~\cite{dimitrakakis:icaart2010}. This sampled MDP models from the current belief to estimate stochastic upper and lower bounds. These bounds were then used to perform a stochastic branch and bound search for an optimal policy. In a follow-up paper~\cite{dimitrakakis:mmbi:ewrl:2011}, an attempt was made to obtain tighter lower bounds by finding a good memoryless policy. An earlier class of approaches involving lower bounds is the work of \cite{poupart2006asd}, which sampled beliefs rather than MDPs to construct lower bound approximations.

In order to perform the approximations, we also introduce a number of gradient-based algorithms. Relevant work in this domain includes the Gaussian process (GP) based algorithms suggested by  \cite{engel2003bayes,ghavamzadeh:bpga} and \cite{Deisenroth2011:pilco}. In particular, \cite{engel2003bayes} performs an incremental temporal-difference fit of the value function using GPs, implicitly using the empirical model of the process. The other two approaches are model-based, with \cite{ghavamzadeh:bpga} estimating a gradient direction for policy improvement by drawing sample trajectories from the marginal distribution. An analytic solution to the problem of policy improvement with GPs is given by~\cite{Deisenroth2011:pilco}, which however relies on the \emph{expected} transition kernel of the process and so does not appear to take the model uncertainty into account.

The approaches suggested in this paper are considerably simpler, as well as more general, in that they are applicable to any Bayesian model of the Markov process and parametrisation of the value function. The fundamental idea stems from the observation that, in order to estimate the Bayes-utility of a policy, we can draw sample MDPs from the posterior, calculate the (either current policy's, or the optimal) utility for each MDP and average. The same effect can be achieved in an iterative procedure, by drawing only one MDP, estimating the utility of our policy, and then adjusting our parameters to approach the sampled utility. This can be achieved with gradient methods. Finally, we  use the same sampling idea to minimise the Bellman error of the Bayes-expected value function, in a fully incremental fashion that explicitly takes into account the model uncertainty.

\section{Gradient Bayesian reinforcement learning}
\label{sec:gbrl}
Imagine that the history $\hist_t \in \CH$ of length $t$ has been generated from $\Pmp$, the process defiend by an MDP $\mdp \in \MDPs$ controlled with a history-dependent policy $\pol$. Now consider a prior belief $\bel_0$ on $\MDPs$ with the property that $\bel_0(\cdot \mid \pol) = \bel_0(\cdot)$, i.e. that the prior is independent of the policy used. Then the posterior probability, given a history $\hist_t$ generated by a policy $\pol$, that $\mdp \in B$ can be written as:
\begin{align}
  \bel_t(B \mid \pol) \defn \bel_0(B \mid \hist_t, \pol) 
  &=
  \frac{\int_B \Pmp (\hist_t) \dd{\bel_0}(\mdp)}
  {\int_\MDPs \Pmp (\hist_t) \dd{\bel_0}(\mdp)}.
  \label{eq:posterior}
\end{align}
Fortunately, the dependence on the policy can be removed, since the posterior is the same for all policies that put non-zero mass on the observed data.
Thus, in the sequel we shall simply write $\bel_t$ for the posterior probability over MDPs at time $t$.

\subsection{Value functions}
\label{sec:value-function}
Value functions are an important concept in reinforcement learning. Briefly, a value function $V_\mdp^\pol : \CS \to \Reals$ gives the expected utility for the policy $\pol$ acting in an MDP $\mdp$, given that we start at state $s$, i.e.
\begin{equation}
  \label{eq:value-function}
  V_\mdp^\pol(s) \defn \Emp(U \mid s_t = s).
\end{equation}
A similar notion is expressed by the $Q$-value function $Q_\mdp^\pol : \CS \times \CA \to \Reals$, which is the expected utility for the policy $\pol$ acting in an MDP $\mdp$, given that we start at state $s$ and take action $a$, i.e.
\begin{equation}
  \label{eq:q-value-function}
  Q_\mdp^\pol(s, a) \defn \Emp(U \mid s_t = s, a_t = a).
\end{equation}
Similarly, and with a slight abuse of notation, we define the \emph{Bayesian} value function $V_\bel^\pol : \CS \to \Reals$, and the related Bayesian $Q$-value function  $Q_\bel^\pol : \CS \times \CA \to \Reals$. These are defined for any belief $\bel$ and policy $\pol$ to be the corresponding expected value functions over all MDPs.
\begin{align}
  \label{eq:bayesian-value-function}
  V_\bel^\pol(s) &\defn \int_\MDPs V_\mdp^\pol(s) \dd{\bel(\mdp)}\\
  Q_\bel^\pol(s,a) &\defn \int_\MDPs Q_\mdp^\pol(s,a) \dd{\bel(\mdp)}
\end{align}
Due to the convexity of the Bayes-optimal expected utility~\cite{Degroot:OptimalStatisticalDecisions} with respect to the belief $\bel$,  it can be bounded from above and below also for the Bayesian RL problem~\cite{dimitrakakis:icaart2010}:
\begin{equation}
  \label{eq:bounds}
  \int_\MDPs \max_\pol (\E_\bel^\pol U) \dd{\bel}(\mdp)
  \geq
   \E_\bel^* (U)
  \geq
  \E_\bel^{\pol'}(U),
  \qquad
  \forall \pol' \in \Pols.
\end{equation}
Since it hard to find the Bayes-optimal policy~\cite{duff2002olc,dearden99bayesian,Degroot:OptimalStatisticalDecisions,dimitrakakis:icaart2010}, we may instead try and estimate upper and lower bounds on the expected utility, and consequently, on the $Q$-value function. These can then be used to implement a heuristic policy that is either exploratory (when we use upper bounds) or conservative (when we use lower bounds).

To achieve this, we propose a number of simple algorithms. 
First, we describe the direct upper bound estimation proposed in \cite{dimitrakakis:icaart2010} in the context of tree search. Here, we apply it to select a policy directly, in a manner similar to the lower bound approach in~\cite{dimitrakakis:mmbi:ewrl:2011}. We then describe gradient-based incremental versions of both algorithms. However, all of these algorithms require estimating the value function of a sampled MDP, a potentially expensive process. For this reason, we also derive a gradient-based algorithm for minimising the Bayes-value function Bellman error. This is shown to perform almost as well as the previous algorithms, with significantly less computational effort.

\subsection{Direct upper bound estimation}

The idea of the following algorithm stems directly from the definition of the upper bound~(\ref{eq:bayes-utility}). In fact, \cite{dimitrakakis:icaart2010} had previously used such upper bounds in order to guide tree search, while \cite{dimitrakakis:mmbi:ewrl:2011} had used \emph{lower bounds} directly for taking actions. 
However, to our knowledge the simple idea of estimating the upper bound (\ref{eq:bayes-utility}) and using it to directly take actions has never been tried before in practice.

 We can estimate an upper bound value vector\footnote{For continuous spaces, this can be defined on a set of representative states.} $\vq$ by direct Monte Carlo sampling\footnote{Due to the Hoeffding bound~\cite{Hoeffding:SumInequalities} and the boundedness of the value function, it is easy to see that this estimate is $O(1/\sqrt{\nsam})$-close to the upper bound (\ref{eq:bounds}) with high probability.}
 from our belief $\bel$:
\begin{equation}
  \label{eq:vq}
  q_{s,a} = \frac{1}{\nsam} \sum_{i=1}^\nsam Q_{\mdp_i}^*(s,a), \qquad  \mdp_i \sim \bel.
\end{equation}
This idea is significantly simpler than that constructing \emph{credible intervals} (see for example \cite{srinivas:gp-bandits:icml2010}). In addition, estimation of  $Q_{\mdp_i}^*$ for each sampled MDP is easy. This is in contrast with the lower bound approach advocated in \cite{dimitrakakis:mmbi:ewrl:2011}. 

\begin{algorithm}
  \begin{algorithmic}
    \STATE \textbf{Input} prior $\bel_0$, value vector $\vq$, initial state $s_0$, number of samples $\nsam$.
    \FOR {$t=0, \ldots$}
    \IF {switch-policy}
    \STATE $\mdp_1, \ldots, \mdp_k \sim \bel_t$ \hfill // Sample $\nsam$ MDPs
    \STATE $\vq = \frac{1}{\nsam} \sum_{i=1}^\nsam Q_{\mdp_i}^*$ \hfill // Get $Q$ upper bound.
    \ENDIF
    \STATE $a_t  = \argmax_{a \in \CA} \vq_{s, a}$. \hfill // Act in the real MDP
    \STATE $s_{t+1}, r_{t+1} \sim \mdp$ \hfill // Observe next state, reward
    \STATE $\bel_{t+1}(\cdot) = \bel_t(\cdot \mid s_{t+1}, r_{t+1}, s_t, a_t)$ \hfill // Update posterior
    \ENDFOR
  \end{algorithmic}
  \caption{U-MCBRL: Upper-bound Monte-Carlo Bayesian RL}
  \label{alg:ubrl}
\end{algorithm}
The algorithm is presented in Alg.\ref{alg:ubrl}.
A hyperparameter of the algorithm is the number of samples $\nsam$ to take at each iteration, as well as the points at which to switch policy\footnote{since re-sampling and calculating new value functions is expensive}. This paper uses the simple strategy of linearly incrementing the switching interval. Let us now see how we can directly approximate both lower bounds such as those in \cite{dimitrakakis:mmbi:ewrl:2011}, and upper bounds, such as this in Alg.~\ref{alg:ubrl}, via gradient methods. 

\subsection{Direct gradient approximation}
\label{sec:direct}
We now present a simple class of algorithms for gradient Bayesian reinforcement learning. First, let us consider the estimation for a specific policy $\pol$, which will correspond to approximating a lower bound. Define a family of functions $\vv_\param : \CS \to \Reals$, $\cset{\vv_\param}{\param \in \Params}$. We consider the problem of estimating the expected value function given some belief $\bel$:
\begin{align}
  \label{eq:estimation}
  \min_{\param \in \Params} & f(\theta),
  &
  f(\theta)&  \defn \int_\CS g(\param; s) \dd{\chi}(s),
  \\ &&
  g(\param; s) & \defn
  \|\vv_\param(s) - V^\pol_\bel(s)\| 
  \label{eq:global-g}
\end{align}
where $\chi$ is a measure on $\CS$, and $\|\cdot\|$ is the Euclidean norm.
Then the derivative of \eqref{eq:global-g} can be written as:
\begin{align}
  \label{eq:cost}
  \nabla_\param g(\param;s)
  & = 
  2\left(\vv_\param(s) - V^\pol_\bel(s)\right)
  \nabla_\param \vv_\param(s).
\end{align}
Let $\omega_k(s) = V_{\mu_k}^\pol(s)$, be the value function of an MDP sampled from the belief, i.e. $\mu_k \sim \bel$. Then, due to the linearity of expectations, it is easy to see that:
\begin{equation}
    \nabla_\param g(\param;s)
    = \E_\bel\left[2\left(\vv_\param(s) - \omega_k(s)\right)
      \nabla_\param \vv_\param(s)\right].
    \label{eq:expected-cost}
\end{equation}
Consequently, $\omega_k$ can be used to obtain the following stochastic approximation~\cite{robbins1951stochastic,BertsekasTsitsiklis:NDP} algorithm
\begin{equation}
  \label{eq:stoch-approx}
  \theta_{k+1} = \theta_k 
  - \step_k \left(\vv_\param(s) - \omega_k(s)\right)
      \nabla_\param \vv_\param(s),
\end{equation}
where $\step_k$ must be a step-size parameter satisfying $\sum_k \step_k = \infty$, $\sum_k \step^2_k < \infty$. A similar approach can be used to estimate the $Q$-value function with an approximation $\vq_\param : \CS \times \CA \to \Reals$:
\begin{equation}
  \label{eq:Q-stoch-approx}
  \theta_{k+1} = \theta_k 
  - \step_k \left(\vq_\param(s,a) - \omega_k(s,a)\right)
      \nabla_\param \vq_\param(s,a),
\end{equation}
where $\omega_k(s, a) = Q_{\mu_k}^\pol(s,a)$. This update can also be performed over the complete state-action space
\begin{align}
  \label{eq:D-stoch-approx}
  \theta_{k+1} &= \theta_k 
  - \step_k \sum_{s,a} D_k(s,a),
  \\
  D_k(s,a) &= \left(\vq_\param(s,a) - \omega_k(s,a)\right) \nabla_\param \vq_\param(s,a).
\end{align}

The same procedure can be applied to approximate the upper bound~(\ref{eq:bounds}). This only requires a trivial modification to the above algorithms, by setting $\omega_k(s) = V^*_{\mu_k}(s)$ or $\omega_k(s,a) = Q^*_{\mu_k}(s,a)$ in either case. It is easy to see that the above approximation still holds.

\begin{algorithm}
  \begin{algorithmic}
    \STATE \textbf{Input} prior $\bel_0$, parameters $\param_0$, initial state $s_0$
    \FOR {$t=0, \ldots$}
    \STATE $\mdp_t \sim \bel_t$ \hfill // Sample an MDP
    \STATE $\omega_t = Q_{\mdp_t}^\pol$ (or $Q_\mdp^*$) \hfill // Get value of sample
    \STATE $\theta_{t+1} = \theta_t
  - \step_k \sum_{s,a} D_k(s,a)$ \hfill // Update parameters
    \STATE $a_t  = \argmax_{a \in \CA} \vq_{\param_t}(s, a)$. \hfill // Act in the real MDP
    \STATE $s_{t+1}, r_{t+1} \sim \mdp$ \hfill // Observe next state, reward
    \STATE $\bel_{t+1}(\cdot) = \bel_t(\cdot \mid s_{t+1}, r_{t+1}, s_t, a_t)$ \hfill // Update posterior
    \ENDFOR
  \end{algorithmic}
  \caption{DGBRL: Direct gradient Bayesian RL.}
  \label{alg:dgbrl}
\end{algorithm}

To make the approximation faster, we can take a single MDP sample at every step, take an action, and then use the previous approximation for the next step. If the belief $\bel_t$ changes sufficiently slowly then this will be almost as good as taking multiple samples and finding the best approximation at every step. The complete algorithm is shown in Alg.\ref{alg:dgbrl}. The advantage of this idea over the upper and lower bound approach advocated in~\cite{dimitrakakis:icaart2010,dimitrakakis:mmbi:ewrl:2011}, is that we can re-use information from previous steps without needing to take multiple MDP samples.

In either case, the computational difficulty is the calculation of $V_{\mu_k}$, which we still need to do once at every step. The next section discusses another idea where the complete estimation of a value function for each sampled MDP is not required.

\subsection{Temporal difference-like error minimisation}
\label{sec:td}
One alternative idea is to simply estimate a consistent 
value function approximation, similar to those used in temporal-difference (TD) methods (in particular the gradient-based view of TD-like methods in~\cite{BertsekasTsitsiklis:NDP}).
The general idea is to form the following minimisation problem:
\begin{align}
  \label{eq:td-estimation}
  \min_{\param \in \Params} & f(\theta),
 \quad
  f(\theta)  \defn \int_\CS g(\param; s) \dd{\chi}(s),
  \\ 
  g(\param; s) & \defn \int_\MDPs \|h(\param; \mdp, s)\| \dd{\bel}(\mdp)
  \\ 
  h(\param; \mdp, s) & \defn  \vv_\param(s) - \rew(s) - \disc  \! \int_{\mathrlap{\CS}} \vv_\param(s') \dd{\Pmp}(s'\mid s).
  \label{eq:td-approx}
\end{align}
Now let us sample a state  $s_k \sim \chi$ from the state distribution, an MDP $\mdp_k \sim \bel$ from the belief and a next state $s'_k \sim \Pr^\pol_{\mdp_k}(s' \mid s_k)$ from the transition kernel of the sampled MDP given the sampled current state. Using the euclidean norm for $\|\cdot\|$  and taking the gradient with respect to $\param$ we obtain:
\begin{align}
  D_k &=
 2h(\param_k; \mdp_k, s_k) \left( \nabla_\param \vv_{\param_k}(s_k) - \disc \nabla_\param \vv_{\param_k}(s_k')\right)
 \label{eq:td-error}
  \\
  \theta_{k+1} &=  \theta_k - \step_k D_k.
  \label{eq:td-gradient}
\end{align}
By choosing an appropriate approximation architecture, e.g. a linear approximation with bounded bases, the following corollary holds:
\begin{corollary}
  If $\|\nabla_{\theta} \vv_\theta\| \leq c$ and  $\|\nabla^2_{\theta} \vv_\theta\| \leq c'$ with $c, c' < \infty$, then  $f(\theta_k)$ converges, with $\lim_{t \to \infty} \nabla_\theta f(\theta_k) = 0$.
  \label{cor:td-error}
\end{corollary}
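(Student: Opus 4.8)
The plan is to read the recursion~\eqref{eq:td-gradient} as a stochastic gradient descent scheme and to invoke a standard stochastic approximation theorem, such as the gradient-method-with-errors results of~\cite{BertsekasTsitsiklis:NDP} or the Robbins--Monro scheme~\cite{robbins1951stochastic}. Such a theorem concludes exactly that $f(\theta_k)$ converges and $\nabla_\theta f(\theta_k)\to 0$, provided: (i) $f$ is bounded below; (ii) $\nabla_\theta f$ is Lipschitz; (iii) the update $D_k$ decomposes as $D_k=\nabla_\theta f(\theta_k)+w_k$ with $\E[w_k\mid\theta_k]=0$ and $\E[\|w_k\|^2\mid\theta_k]\le A(1+\|\nabla_\theta f(\theta_k)\|^2)$; and (iv) the step sizes satisfy the Robbins--Monro conditions. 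Condition (iv) is assumed, and (i) holds since $f$ integrates a nonnegative quantity. Note also that $D_k$ in~\eqref{eq:td-error} is precisely $\nabla_\theta(\hat h_k^2)$ for the sampled residual, so the effective objective is the \emph{squared}-norm version of~\eqref{eq:td-approx}; reading $\|\cdot\|$ as $\|\cdot\|^2$ is convenient here, as it also removes the non-differentiability of the Euclidean norm at the origin. The work is therefore in (ii) and (iii).

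First I would establish smoothness. Writing $\nabla_\theta g(\theta;s)=\int_\MDPs 2h(\theta;\mdp,s)\,\nabla_\theta h(\theta;\mdp,s)\dd\bel(\mdp)$ with $\nabla_\theta h=\nabla_\theta\vv_\theta(s)-\disc\int_\CS\nabla_\theta\vv_\theta(s')\dd\Pmp(s'\mid s)$, the bound $\|\nabla_\theta\vv_\theta\|\le c$ makes $h$ Lipschitz in $\theta$ with constant $(1+\disc)c$ and bounds $\|\nabla_\theta h\|$, while $\|\nabla^2_\theta\vv_\theta\|\le c'$ bounds $\nabla^2_\theta h$. Differentiating once more and applying the product rule gives a uniform bound on $\nabla^2_\theta g$ in terms of $c$, $c'$ and a bound on $|h|$; integrating over $\chi$ transfers this to $f$, so $\nabla_\theta f$ is Lipschitz. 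For a linear architecture with bounded bases one has $c'=0$ and $f$ is quadratic, making this step immediate.

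Next I would control the update. Since $D_k=2\hat h_k\bigl(\nabla_\theta\vv_{\theta_k}(s_k)-\disc\nabla_\theta\vv_{\theta_k}(s'_k)\bigr)$ with $\hat h_k\defn\vv_{\theta_k}(s_k)-\rew(s_k)-\disc\vv_{\theta_k}(s'_k)$, the bound $\|\nabla_\theta\vv_\theta\|\le c$ yields $\|D_k\|\le 2(1+\disc)c\,|\hat h_k|$, and boundedness of the rewards (hence of the value functions, as $U\in[0,\Urange]$) controls $|\hat h_k|$, so the second-moment growth condition in (iii) holds. The conditional mean $\E[D_k\mid\theta_k]$ is then computed by averaging over $s_k\sim\chi$, $\mdp_k\sim\bel$ and $s'_k\sim\Pr^\pol_{\mdp_k}$.

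The main obstacle is that this conditional mean is \emph{not} $\nabla_\theta f$ as literally written. Because the single next state $s'_k$ enters both the residual $\hat h_k$ and the gradient factor $\nabla_\theta\vv_{\theta_k}(s'_k)$, taking the expectation over $s'_k$ introduces an extra term $\disc^2\,\mathrm{Var}_{s'}\bigl(\vv_{\theta_k}(s')\bigr)$: the classic double-sampling bias of residual-gradient methods. Thus $\E[D_k\mid\theta_k]=\nabla_\theta\tilde f(\theta_k)$, where $\tilde f$ adds this variance penalty to $f$, and $w_k\defn D_k-\nabla_\theta\tilde f(\theta_k)$ is a mean-zero martingale difference satisfying (iii). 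A clean proof therefore either (a) takes $\tilde f$ as the objective---still bounded below and Lipschitz-smooth under the same hypotheses, so the theorem applies verbatim and gives $\nabla_\theta\tilde f(\theta_k)\to 0$---or (b) restores unbiasedness by drawing a second independent next state for the gradient factor, after which $\E[D_k\mid\theta_k]=\nabla_\theta f(\theta_k)$ exactly and the conclusion holds for $f$ itself. I would state the corollary for whichever objective the sampling scheme makes unbiased and flag the variance term explicitly, as it is the only point where the argument is not a routine verification of the stochastic-gradient hypotheses.
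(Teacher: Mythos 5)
Your overall route coincides with the paper's: read \eqref{eq:td-gradient} as stochastic gradient descent on $f$ (with $\|\cdot\|$ understood as the squared norm, as you correctly note) and invoke the gradient-methods-with-errors result of \cite{BertsekasTsitsiklis:NDP} (the paper cites Proposition 4.1 under Assumption 4.2), checking nonnegativity, Lipschitz-smoothness via the bounds $c, c'$, unbiasedness of the update, and the noise and step-size conditions; your smoothness verification is in fact more detailed than the paper's one-line version. However, your ``main obstacle'' is not present in the paper's algorithm. You silently replaced the paper's residual $h(\param_k;\mdp_k,s_k)$, which by \eqref{eq:td-approx} contains the \emph{exact} expectation $\int_\CS \vv_\param(s')\dd{\Pr^\pol_{\mdp_k}}(s'\mid s_k)$, with the sampled residual $\hat h_k = \vv_{\param_k}(s_k)-\rew(s_k)-\disc\,\vv_{\param_k}(s'_k)$. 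But $\mdp_k$ is drawn from the posterior and is therefore a fully known MDP, so that integral is computable in closed form --- this is precisely the benefit of sampling a whole model rather than a transition. The sampled next state $s'_k$ enters $D_k$ in \eqref{eq:td-error} only \emph{linearly}, through the factor $\nabla_\param \vv_{\param_k}(s'_k)$, multiplied by the scalar $h(\param_k;\mdp_k,s_k)$ which is deterministic given $(\param_k,\mdp_k,s_k)$; hence $\E[D_k \mid \param_k] = \int_\CS\int_\MDPs 2h\,\nabla_\param h \,\dd{\bel}(\mdp)\dd{\chi}(s) = \nabla_\param f(\param_k)$ exactly, which is the paper's condition (c). No variance-penalized surrogate $\tilde f$ and no second independent draw of $s'$ are needed: your option (a) proves convergence for an objective other than the one in the corollary, and your option (b) modifies the algorithm --- both fix a bias the update does not have.

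A secondary weak point is your second-moment bound: you argue $|\hat h_k|$ is bounded ``since $U\in[0,\Urange]$'', but the residual involves the \emph{approximation} $\vv_{\param_k}$, not a true value function, and $\vv_\param$ need not be uniformly bounded over $\param$ (a linear architecture is unbounded in $\param$). What the noise condition actually requires is $\E\left[\|D_k\|^2 \mid \param_k\right] \le A\left(1+\|\nabla_\param f(\param_k)\|^2\right)$; since $\|D_k\| \le 2(1+\disc)c\,|h|$, this reduces to showing $\E[h^2] = O(1+\|\nabla_\param f\|^2)$, which holds for instance for linear architectures, where $f$ is quadratic, and is what the paper's terse statement that condition (d) ``follows from the boundedness of the first derivative'' is gesturing at. With condition (c) restored and this growth bound made explicit, the rest of your verification lines up with the paper's proof.
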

\begin{proof}
This results follows from Proposition 4.1 in \cite{BertsekasTsitsiklis:NDP}, since the the sequence satisfies the four conditions in Assumption~4.2. (a) $f \geq 0$. (b) $f$ is twice differentiable and its second derivative is bounded, as $\norm{\int_\CS \nabla^2_{\theta} \vv_\theta(s')  \dd{\Pmp}(s'\mid s)}
\leq \int_\CS \|\nabla^2_{\theta} \vv_\theta(s')\|  \dd{\Pmp}(s'\mid s)
\leq c$. (c) By taking expectations over the sample, it is easy to see that $\E D_k = \nabla_\param f(\theta_k)$. (d) follows from the boundedness of the first derivative. 
\end{proof}

\subsection{Bellman error minimisation}
\label{sec:bellman}
An alternative formulation is Bellman error minimisation (\cite{BertsekasTsitsiklis:NDP}, Sec. 6.10), where instead of minimising the error with respect to the current policy, we minimise the error over the Bellman operator applied to the current value function. This is simplest to do when we are working with $Q$-value functions. Then the problem can be written as:
\begin{align}
  \label{eq:bellman-estimation}
  \min_{\param \in \Params} & f(\theta),
  \quad
  f(\theta)  \defn \sum_{a \in \CA} \int_\CS g(\param; s,a) \dd{\chi}(s),
  \\ 
  g(\param; s,a) & \defn \int_\MDPs \|h(\param; \mdp, s,a)\| \dd{\bel}(\mdp)
  \\ 
  h(\param; \mdp, s,a) & \defn  \vq_\param(s,a) - \rew(s) - \disc  \! \int_{\mathrlap{\CS}} \vq_\param(s',a^*(s')) \dd{\Pmp}(s'\mid s)
  \\
  a^*(s') &\defn \argmax_{a' \in \CA} \vq_\param(s', a').
\end{align}
Using the same reasoning as in Sec.~\ref{sec:td}, we sample  $s_k \sim \chi$, $\mdp_k \sim \bel$ from the belief and a next state $s'_k \sim \Pr^\pol_{\mdp_k}(s' \mid s_k)$ from the transition kernel of the sampled MDP given the sampled current state. Using the euclidean norm for $\|\cdot\|$  and taking the gradient with respect to $\param$ we obtain the algorithm:
\begin{align}
  \label{eq:bellman-update}
  D_k & = 2h(\theta_k; \mdp_k, s_k, a_k) [\nabla_{\param_k} \vq_\param(s_k,a_k)
    \\
    &- \disc \nabla_\param \vq_{\param_k}(s'_{k}, a^*(s'_{k}))]\nonumber
  \\
  \theta_{k+1} &= \theta_k - \eta_k D_k.
\end{align}
It easy to see that Corollary~\ref{cor:td-error} is also applicable to this update sequence. When the state sequence is generated from a particular policy, rather than being drawn from some distribution $\chi$, we obtain Alg.\ref{alg:bgbrl}.
\begin{algorithm}
  \begin{algorithmic}
    \STATE \textbf{Input} prior $\bel_0$, parameters $\param_0$, initial state $s_0$
    \FOR {$t=0, \ldots$}
    \STATE $\mdp_t \sim \bel_t$ \hfill // Sample an MDP
    \STATE $s'_t \sim \Pr^\pol_{\mdp_t}(s_{t+1} \mid s_t)$ \hfill // Sample a next state
    \STATE $\theta_{t+1} = \theta_t
  - \step_t D_t$ \hfill // Update parameters using \eqref{eq:bellman-update}
    \STATE $a_t  = \argmax_{a \in \CA} \vq_{\param_t}(s, a)$. \hfill // Act in the real MDP
    \STATE $s_{t+1}, r_{t+1} \sim \mdp$ \hfill // Observe next state, reward
    \STATE $\bel_{t+1}(\cdot) = \bel_t(\cdot \mid s_{t+1}, r_{t+1}, s_t, a_t)$ \hfill // Update posterior
    \ENDFOR
  \end{algorithmic}
  \caption{BGBRL: Bellman gradient Bayesian RL}
  \label{alg:bgbrl}
\end{algorithm}

\section{Experiments}

We present experiments illustrating the performance of U-MCBRL and BGRL and compare them with other algorithms. In particular we also examine the lower-bound algorithm presented in~\cite{dimitrakakis:mmbi:ewrl:2011}, the well known UCRL~\cite{JMLR:UCRL2} algorithm,\footnote{Although UCRL is defined for undiscounted problems, it is trivial to apply to discounted problems by adding replacing  average  value iteration with discounted value iteration.} and $Q(\lambda)$, for completeness. 

\subsection{Experiment design}

Since all algorithms have  hyperparameters, we followed a principled experiment design methodology. 
Firstly, we selected a set of possible hyperparameter values for each algorithm. For each evaluation domain, we performed $10$ runs for each hyperparameter choice and chose the one with the highest total reward over these runs. We then measured the performance of the algorithms over $10^3$ runs. This ensures an unbiased evaluation.

\begin{table}[h]
  \centering
  \begin{tabular}{c|c|c}
    Methods & parameter & function\\
    \hline
    $Q(\lambda)$ & $\egreedy_0$ & exploration\\
    UCRL & $\delta$ & confidence interval\\
    MCBRL, U-MCBRL & $\nsam$ & number of samples\\
    BGBRL, $Q(\lambda)$ & $\step_0$ & step size
  \end{tabular}
  \caption{Automatically tuned hyperparameters}
  \label{tab:hyperparameters}
\end{table}
The set of hyper-parameters that were automatically tuned for each method are listed in Table~\ref{tab:hyperparameters}. For $Q(\lambda)$, we fixed $\lambda = 0.9$ and used an $\egreedy$-greedy strategy with a decaying rate and tuned initial value $\epsilon_0$. For UCRL, we tune the interval error probability $\delta$. Gradient algorithms require tuning the initial step-size parameter $\step_0$.  Monte-Carlo algorithms require tuning the number of samples $\nsam$. UCRL, MCBRL and U-MCBRL all used the same policy-switching heuristic.

\subsection{Domains}

We employed standard domains from discrete-state problems in exploration in reinforcement learning. Thus, Bayesian inference is closed-form, as we can use a Dirichlet-product prior for the transitions and a Normal-Gamma prior for the reward. Value function parametrisation is tabular, i.e. there is one parameter per state-action pair. These domains are the Chain problem~\cite{strens2000bayesian}, River-Swim~\cite{strehl2008analysis}, Double-Loop~\cite{strens2000bayesian}. In addition, we consider the mountain car domain of~\cite{Sutton+Barto:1998}, using a uniform $5 \times 5$ grid as features. All domains employed a discount factor $\disc = 0.99$.
\begin{table}[ht]
  \centering
  \begin{tabular}{c|c|c|c||c}
    \multicolumn{5}{c}{Chain} \\ \hline
    $Q(\lambda)$ & 1993.9 & 1999.7 & 2005.4     & 3 \\ 
    UCRL         & 3543.5 & 3547.5 & 3551.3     & 1613 \\
    \textbf{MCBRL} & 3610.5 & 3616.1 & 3621.7   & 464\\
    \textbf{U-MCBRL} & 3617.8 & 3623.4 & 3629.1 & 1560\\
    BGBRL        & 3593.6 & 3598.3 & 3602.7     & 48\\
    \hline
    \multicolumn{5}{c}{Double Loop} \\ \hline
    $Q(\lambda)$ & 2053.7 & 2058.1 & 2062.1     &   5 \\
    UCRL         & 3841.0 & 3841.0 & 3841.0     & 369 \\ 
    \textbf{MCBRL} & 3949.5 & 3950.2 & 3951.0   & 2343 \\ 
    U-MCBRL        & 3946.7 & 3947.5 & 3948.3   & 5135 \\ 
    BGBRL        & 3925.3 & 3926.2 & 3927.0    & 96     \\ 
    \hline
    \multicolumn{5}{c}{River Swim} \\ \hline
    $Q(\lambda)$ & 5.0 & 5.0 & 5.0           & 5    \\
    UCRL         & 312.4 & 313.8 & 315.3     & 240  \\ 
    \textbf{MCBRL} & 624.0 & 625.4 & 626.8   & 1187 \\ 
    \textbf{U-MCBRL} & 626.3 & 627.6 & 629.0 & 2329 \\ 
    BGBRL        & 600.3 & 601.7 & 603.2     & 69   \\ 
    \hline
    \multicolumn{5}{c}{Mountain Car $5 \times 5$} \\ \hline
    $Q(\lambda)$ & -9957.6 & -9957.0 & -9956.3     & 15 \\
    UCRL         & -9952.9 & -9951.6 & -9950.3     & 1908 \\ 
    MCBRL         & -9829.1 & -9827.2 & -9825.5    & 35733 \\ 
    \textbf{U-MCBRL} & -9811.8 & -9810.2 & -9808.6 & 66252\\ 
    BGBRL        & -9883.2 & -9881.9 & -9880.6     & 886 \\\hline
    Method & 95\% lower & mean & 95\% upper & CPU (s)
  \end{tabular}
  \caption{Total reward and CPU time}
  \label{tab:total-reward}
\end{table}

\begin{figure*}[ht]
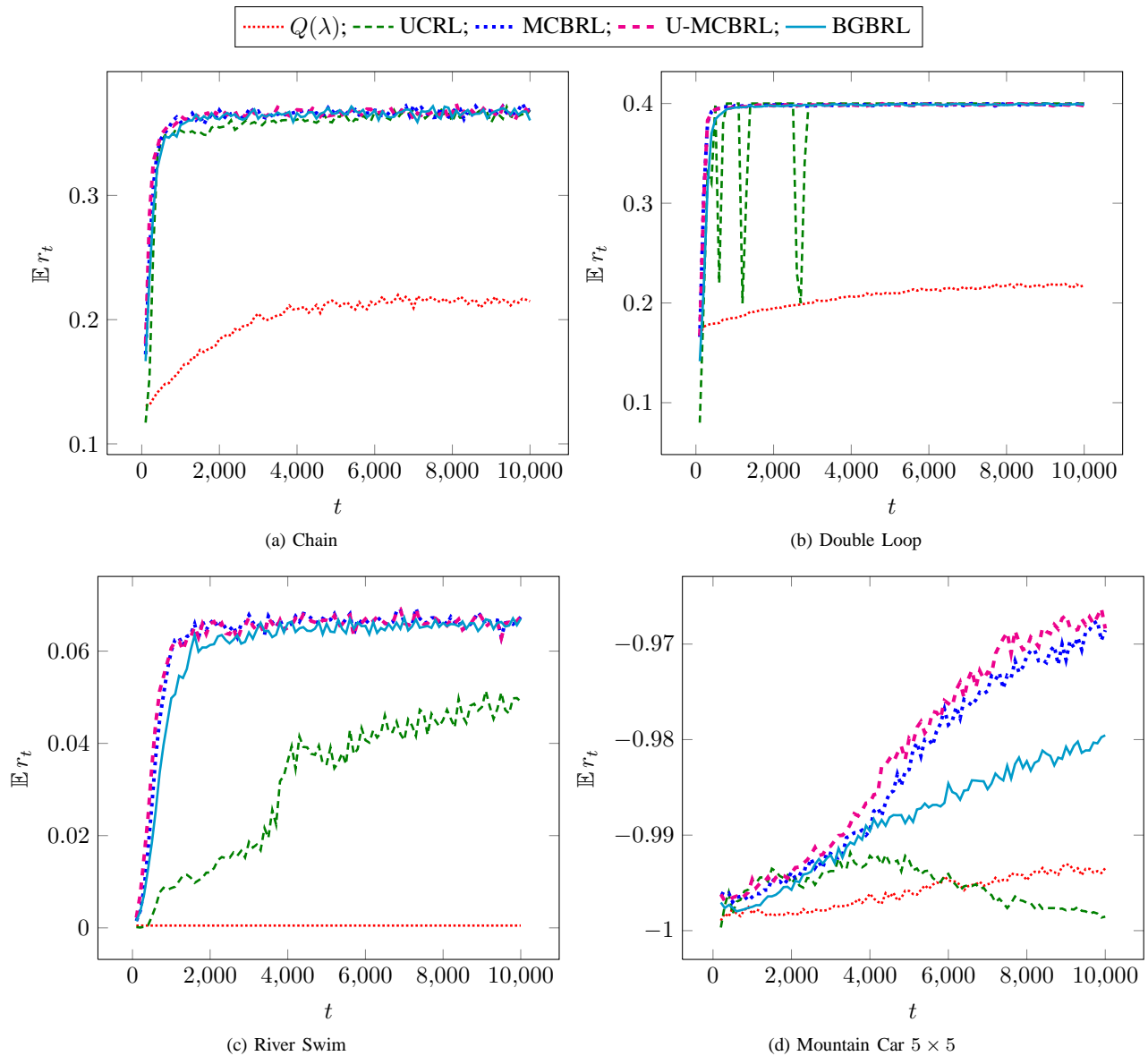

  \centering
  \ref{leg:online}
  \subfloat[Chain]{\input{figures/Chain_payoff.tex}}
  \subfloat[Double Loop]{\input{figures/DoubleLoop_payoff.tex}}
  \\
  \subfloat[River Swim]{\input{figures/RiverSwim_payoff.tex}}
  \subfloat[Mountain Car $5 \times 5$]{\input{figures/MountainCar_payoff.tex}}
  \caption{Reward per step, smoothed over 100 steps and averaged over $10^3$ runs.}
  \label{fig:reward}
\end{figure*}
\subsection{Results}
From the online performance results shown in Fig~\ref{fig:reward}, it is clear that apart from $Q(\lambda)$, all algorithms are performing relatively similarly in the simpler environments. However, UCRL converges somewhat more slowly and is particularly unstable in the Mountain Car domain.~\footnote{Due to the discretisation, this domain is no longer fully observable.}

A clearer view of the performance of each algorithm is can be seen in Table~\ref{tab:total-reward}, in terms of the average total reward obtained.
 It additionally shows  the $95\%$ lower and upper confidence bound calculated on the mean (shown in the middle column) via $10^4$ bootstrap samples.
The best-performing methods in each environment (taking into account the bootstrap intervals) are shown in bold.
One immediately notices that MCBRL and U-MCBRL are usually tied for best. This is perhaps not surprising, as they have the same structure: in fact, for $\nsam=1$, they are equivalent to Thompson sampling~\cite{strens2000bayesian}, as mentioned in~\cite{dimitrakakis:mmbi:ewrl:2011}. However, MCBRL uses a \emph{lower bound} on the value function, while U-MCBRL an \emph{upper bound}, which makes it more optimistic.\footnote{Although we did not explicitly consider Thompson sampling, we note that the hyperparameter $\nsam = 1$ corresponding to Thompson sampling was never chosen by the automatic procedure as it always had worse performance than taking more samples. Nevertheless, its performance over the $10^3$ runs was always significantly worse than those of MCBRL and U-MCBRL.}

The most significant finding, however is that BGBRL is a relatively close second most of the time, performing better than all the remaining algorithms. This is despite its computational simplicity.

\section{Conclusion}

This paper introduced a set of Monte-Carlo algorithms for Bayesian reinforcement learning. The first, U-MCBRL is a modification of a lower-bound algorithm to an upper-bound setting, which has very good performance but has relatively high computational complexity. The second, DGBRL, is a type of gradient-based algorithm for approximating either the lower or the upper bound, but nevertheless does not necessarily alleviate the problem of complexity. Finally, BGBRL defines a novel type of Bellman error minimisation, on the Bayes-expected value function. By performing gradient descent to reduce this error through sampling possible MDPs, we obtain an efficient and highly competitive algorithm.

The algorithms were tested using an unbiased experimental methodology, whereby hyperparameters were automatically selected from a small number of runs. This ensures that algorithmic brittleness is not an issue. In all of those experiments, U-MCBRL and its sibling, MCBRL outperformed all alternatives. However, BGBRL was a close runner-up, even though it is computationally much simpler, as it does not require performing value iteration.

A subject that this paper has not touched upon is the theoretical performance of U-MCBRL and BGBRL. For the first, the results for MCBRL~\cite{dimitrakakis:mmbi:ewrl:2011} should be applicable with few modifications. The performance analysis of BGBRL-like algorithms, on the other hand, is a completely open question at the moment.

\bibliographystyle{IEEEtran}
\bibliography{../../bib/misc,../../bib/mine,../../bib/my_citations}

\end{document}